\documentclass[11pt,a4paper]{article} 
\usepackage[utf8]{inputenc}
\usepackage[T1]{fontenc}
\usepackage{amsmath, amssymb, amsthm}
\usepackage{graphicx}
\usepackage{booktabs} 
\usepackage{hyperref} 
\usepackage{geometry} 
\usepackage{times} 
\usepackage{caption} 
\usepackage{mathtools} 
\usepackage{url}
\usepackage{algorithm}      
\usepackage{algorithmic} 
\usepackage{amsthm}      

\theoremstyle{plain}

\newtheorem{lemma}{Lemma}
\newtheorem{proposition}{Proposition}

\title{Unsupervised Machine Learning Hybrid Approach Integrating Linear
Programming in Loss Function: A Robust Optimization Technique}
\author{Andrew Kiruluta  and Andreas Lemos\\ \small Department of Computer Science \\ \small UC Berkeley, CA}
\date{}

\begin{document}
\maketitle

\begin{abstract}
We propose a \emph{fully--differentiable} framework that knits a linear programming~(LP) problem into the loss landscape of an \emph{unsupervised} autoencoder.  The resulting network, which we call \textsc{LP--AE}, simultaneously reconstructs the input distribution and produces decision vectors that satisfy domain constraints \emph{by design}.  Compared with post--hoc projection or differentiable convex--layer surrogates, \textsc{LP--AE} offers (i) end‑to‑end tractability, (ii) provable feasibility in the infinite‑penalty limit, and (iii) a $2\!\times$--$4\!\times$ speed‑up on GPU while matching classical solvers within~2\% objective gap on realistic hospital scheduling benchmarks.  Extensive ablations confirm robustness to heavy noise and missing data, and a careful sensitivity analysis highlights how penalty annealing governs the trade‑off between reconstruction fidelity and decision optimality.
\end{abstract}

\section{Introduction}\label{sec:intro}
Optimising scarce resources remains a central concern across logistics, healthcare and energy systems.  Since the advent of the simplex algorithm~\cite{dantzig1963linear}, \textbf{linear programming~(LP)} has provided practitioners with elegant duality theory and strong optimality guarantees.  In parallel, \textbf{machine learning~(ML)}, fuelled by representation‑hungry deep networks, has excelled at extracting latent structure from voluminous, noisy data~\cite{lecun2015deep,goodfellow2016deep}.  Yet deployments rarely exploit \emph{both} toolkits simultaneously: black‑box predictors often ignore hard constraints, whereas pure LP models cannot exploit rich, high‑dimensional covariates beyond handcrafted features.

The present work bridges this divide by \emph{embedding} the LP objective and constraints into the \emph{loss function} of an unsupervised autoencoder.  Unlike prior efforts that treat the solver as a separate layer with expensive differentiable projections~\cite{amos2017optnet,agrawal2019diffoptlayers} or rely on labelled optimal solutions~\cite{donti2017taskbased}, our model learns \emph{without supervision} and incurs \emph{no run‑time call} to an iterative solver: the network itself realizes feasible decisions through gradient descent.

\paragraph{Contributions.}  (i)~We derive a closed‑form hybrid loss whose gradients propagate through constraint violations with ReLU masks, enabling standard back‑propagation.  (ii)~We prove that any stationary point becomes LP‑feasible in the limit of infinite penalty weight.  (iii)~We deliver the first large‑scale empirical study of LP‑aware unsupervised learning on real hospital scheduling, demonstrating near‑optimal throughput, three‑fold speed‑ups, and graceful degradation under 30\% missing features.  (iv)~We release well‑documented \textsc{PyTorch} code and data generators to facilitate reproducibility.  The remainder expands each pillar in turn.

\section{Related Work}\label{sec:related}
Blending optimisation with learning has a rich history.  Early "learning to optimize" schemes used surrogate gradients to tune heuristics~\cite{bengio2006coordinating}.  Recent advances adopt \emph{differentiable optimization layers} for quadratic~\cite{amos2017optnet}, linear~\cite{agrawal2019diffoptlayers}, or cone programs~\cite{barratt2020differentiable}, exposing solver solutions to back‑propagation.  Such layers, however, demand an inner solve at every forward pass, incurring cubic worst‑case complexity and complicating deployment on edge devices.  Closer to our setting, \cite{tulabandhula2014mlconstraints} incorporated constraints via Lagrangian penalties but focused on supervised classification.

Our work departs in two ways: (a) we target \emph{unsupervised} representation learning, eliminating the need for costly labelled optimal solutions~\cite{donti2017taskbased}; and (b) we eschew nested solvers by shaping the loss itself, yielding a light‑weight, fully neural network amenable to GPU parallelism.

\section{Background}\label{sec:background}

In order to motivate our hybrid formulation we revisit the two pillars it unifies: 
linear programming, the workhorse of deterministic resource optimization, and autoencoders, a cornerstone of unsupervised representation learning.  We adopt notation consistent with \cite{boyd2004convex} and \cite{goodfellow2016deep}.

\subsection{Linear Programming Essentials}\label{sec:lp_essentials}

\paragraph{Canonical primal form.}  A (finite) linear programming seeks an assignment $\boldsymbol{x}\in\mathbb{R}^{n}$ maximizing a linear utility while respecting linear inequality constraints:
\begin{equation}\label{eq:lp_primal}
    \max_{\boldsymbol{x}\in\mathbb{R}^n}\; c^{\top}\boldsymbol{x}
    \quad\text{s.t.}\quad
    A\boldsymbol{x}\le\boldsymbol{b},\qquad\boldsymbol{x}\ge \boldsymbol{0},
\end{equation}
where $A\in\mathbb{R}^{m\times n}$, $\boldsymbol{b}\in\mathbb{R}^{m}$ and $\boldsymbol{c}\in\mathbb{R}^{n}$.  Introducing slack variables $\boldsymbol{s}=\boldsymbol{b}-A\boldsymbol{x}\ge \boldsymbol{0}$ converts~\eqref{eq:lp_primal} into the \emph{equality} form $A\boldsymbol{x}+\boldsymbol{s}=\boldsymbol{b}$, facilitating basis manipulations in the simplex algorithm.

\paragraph{Lagrangian dual and strong duality.}  The Lagrangian
\begin{equation}
    \mathcal{L}(\boldsymbol{x},\boldsymbol{y})\;=\;c^{\top}\boldsymbol{x}+\boldsymbol{y}^{\top}(\boldsymbol{b}-A\boldsymbol{x})\quad (\boldsymbol{y}\ge \boldsymbol{0})
\end{equation}
leads, by maximizing over $\boldsymbol{x}$, to the \emph{dual} LP
\begin{equation}\label{eq:lp_dual}
    \min_{\boldsymbol{y}\in\mathbb{R}^{m}}\; \boldsymbol{b}^{\top}\boldsymbol{y}
    \quad\text{s.t.}\quad A^{\top}\boldsymbol{y}\ge \boldsymbol{c},\; \boldsymbol{y}\ge \boldsymbol{0}.
\end{equation}
Weak duality ($c^{\top}\boldsymbol{x}\le \boldsymbol{b}^{\top}\boldsymbol{y}$ for any primal‑feasible $\boldsymbol{x}$ and dual‑feasible $\boldsymbol{y}$) is immediate.  \emph{Strong duality}, i.e.\ existence of optimal solutions $(\boldsymbol{x}^{\star},\boldsymbol{y}^{\star})$ such that $c^{\top}\boldsymbol{x}^{\star}=\boldsymbol{b}^{\top}\boldsymbol{y}^{\star}$, holds under Slater's condition (for LP this simply requires feasibility) and can be proven via Farkas’ Lemma or the separating hyper‑plane theorem \cite{bertsekas1999nonlinear}.  The optimal dual variable $y^{\star}_j$ is interpreted as the \emph{shadow price} of the $j$‑th resource, quantifying marginal objective improvement per unit relaxation of $b_j$.

\paragraph{Karush–Kuhn–Tucker (KKT) conditions.}  At optimality the triplet $(\boldsymbol{x}^{\star},\boldsymbol{y}^{\star},\boldsymbol{s}^{\star})$ satisfies
\begin{subequations}\label{eq:kkt_lp}
\begin{align}
    A\boldsymbol{x}^{\star}+\boldsymbol{s}^{\star} &= \boldsymbol{b}, & \boldsymbol{x}^{\star}\ge\boldsymbol{0},\;\boldsymbol{s}^{\star}\ge\boldsymbol{0},\\
    A^{\top}\boldsymbol{y}^{\star} &\ge \boldsymbol{c}, & \boldsymbol{y}^{\star}\ge \boldsymbol{0},\\
    y^{\star}_j s^{\star}_j &= 0, & \forall j=1,\dots,m,\\
    (A^{\top}\boldsymbol{y}^{\star}-\boldsymbol{c})_i x^{\star}_i &= 0, & \forall i=1,\dots,n.
\end{align}
\end{subequations}
Conditions~\eqref{eq:kkt_lp} are both necessary and sufficient for optimality because \eqref{eq:lp_primal} is convex and the constraints are affine.

\paragraph{Geometric intuition.}  The feasible set $\mathcal{P}=\{\boldsymbol{x}\mid A\boldsymbol{x}\le \boldsymbol{b},\,\boldsymbol{x}\ge 0\}$ forms a bounded polyhedron whose vertices (extreme points) correspond to basic feasible solutions.  Simplex walks from vertex to vertex along the edges of $\mathcal{P}$, always improving the objective until an optimal facet is reached.  In contrast, interior‑point methods traverse the polytope’s interior by following the central path of the logarithmic barrier; their iteration complexity is $\mathcal{O}(\sqrt{n}\,L)$, where $L$ is the bit‑length of the input data \cite{wright1997primal}.

\paragraph{Complexity and scalability.}  Although worst‑case pivot counts for simplex are exponential, the method is famously efficient in practice.  Interior‑point solvers provide polynomial guarantees but involve solving Newton systems $H\Delta\boldsymbol{x}=\boldsymbol{g}$ whose factorizations are difficult to parallelize, explaining the memory‑bound behavior on modern GPUs.  Distributed first‑order variants such as ADMM or coordinate descent exist \cite{parikh2014proximal}, yet they sacrifice the deterministic optimality gap.  These computational trade‑offs motivate our desire to replace repeated calls to an LP solver with a single forward pass through a neural network.

\paragraph{Sensitivity and parametric analysis.}  Given an optimal basis, small perturbations $\Delta\boldsymbol{b}$ shift the optimal objective by $\boldsymbol{y}^{\star\top}\,\Delta\boldsymbol{b}$ to first order.  This differentiability underpins our penalty‑based loss: gradients of constraint violations propagate through $A$ exactly as dual prices would, steering the learning dynamics towards feasibility.

\subsection{Autoencoders for Unsupervised Representation}\label{sec:ae_background}

\paragraph{Deterministic formulation.}  For an i.i.d. dataset $\mathcal{D}=\{\boldsymbol{x}^{(i)}\}_{i=1}^N\subset\mathbb{R}^d$, an \emph{autoencoder} consists of an encoder $f_{\theta_E}:\mathbb{R}^d\!\to\mathbb{R}^n$ and decoder $g_{\theta_D}:\mathbb{R}^n\!\to\mathbb{R}^d$ parameterized by weights $\theta=(\theta_E,\theta_D)$.  The classical learning objective minimizes the empirical reconstruction error
\begin{equation}\label{eq:ae_mse}
    \min_{\theta}\; \frac{1}{N}\sum_{i=1}^{N}\bigl\|\boldsymbol{x}^{(i)}-g_{\theta_D}\bigl(f_{\theta_E}(\boldsymbol{x}^{(i)})\bigr)\bigr\|_2^2.
\end{equation}
Under linear activations and an orthogonality constraint on $f_{\theta_E}$, objective~\eqref{eq:ae_mse} recovers principal component analysis (PCA), selecting the top $n$ eigenvectors of the data covariance \cite{baldi1989pcaAE}.

\paragraph{Probabilistic view and variational relaxation.}  A complementary perspective treats latent codes $\boldsymbol{z}$ as unobserved random variables governed by a prior $p(\boldsymbol{z})$ and a conditional likelihood $p_{\theta_D}(\boldsymbol{x}\mid\boldsymbol{z})$.  Variational autoencoders (VAEs) introduce an inference network $q_{\theta_E}(\boldsymbol{z}\mid\boldsymbol{x})$ and optimize the evidence lower bound (ELBO) \cite{kingma2014vae}.  While VAEs promote smooth latent manifolds, the Gaussian prior tends to blur mode boundaries, complicating downstream decision‑making.  In this work we adopt the simpler deterministic autoencoder but our penalty strategy extends naturally to VAEs because the hinge \eqref{eq:phi} is differentiable in $\boldsymbol{z}$.
\begin{equation}\label{eq:phi}
\phi(\boldsymbol{u}) \;=\; \sum_{j=1}^{m}\max\{0,u_j\}^{2}.
\end{equation}
where,

\[L_{\text{viol}}(\hat{\boldsymbol z})
\;=\;
\phi\!\bigl(A\hat{\boldsymbol z}-\boldsymbol b\bigr)\]

so that the composite map is:

\[\hat{\boldsymbol z}\;\mapsto\;
A\hat{\boldsymbol z}-\boldsymbol b\;\mapsto\;
\phi(\,\cdot\,)\].

\paragraph{Regularized variants.}  Numerous extensions, denoising \cite{vincent2008stacked}, contractive \cite{rifai2011contractive}, sparse \cite{makhzani2013kSparse}, augment~\eqref{eq:ae_mse} with penalty terms to enforce invariances or disentanglement.  Our hybrid loss can be interpreted as yet another regularizer, but one grounded in operational feasibility rather than statistical desiderata.

\paragraph{Gradient propagation and complexity.}  Back‑propagation through $f$ and $g$ costs $\mathcal{O}(d n)$ flops per sample; this scales favorably compared to the $\mathcal{O}(n^3)$ worst‑case factorization cost in interior point LP solvers.  Moreover, gradient computation is highly parallel on GPUs, whereas pivoting in simplex is inherently sequential.  These contrasts justify embedding constraints into \emph{one} network instead of calling an optimizer per instance.

\paragraph{Why autoencoders need constraints.}  Although autoencoders excel at capturing nonlinear manifolds, nothing prevents the decoded vector $\hat{\boldsymbol{z}}$ from violating domain rules such as capacity limits or assignment integrality.  Post‑hoc projection onto the feasible set is possible but (i)~introduces inference‑time latency and (ii)~breaks differentiability if the projection is non‑unique.  Embedding linear constraints directly into the training objective circumvents both issues and leverages gradient information during learning.

\section{Hybrid Loss Formulation}\label{sec:loss}
\subsection{Notational Preliminaries}
We treat the encoder output $\hat{\boldsymbol{z}}\!=\!f_{\theta_E}(\boldsymbol{x})\in\mathbb{R}^n$ as a \emph{decision vector}.  Matrix $A$ and vectors $(\boldsymbol{b},\boldsymbol{c})$ stem from the operational LP~\eqref{eq:lp_primal}.  Let $\phi(\cdot)$ be the squared hinge barrier defined in Eq.~\ref{eq:phi}.

\subsection{Composite Objective}
For a sample $\boldsymbol{x}$ we define
\begin{align}\label{eq:hybrid_loss}
\mathcal{L}(\boldsymbol{x};\theta) &= \underbrace{\|\boldsymbol{x}-g_{\theta_D}(\hat{\boldsymbol{z}})\|_2^2}_{\text{reconstruction}} + \lambda\,\phi( A\hat{\boldsymbol{z}}-\boldsymbol{b}) - \mu\,c^{\top}\hat{\boldsymbol{z}},
\end{align}
where $(\lambda,\mu)>0$ regulate feasibility versus objective pursuit.  Unlike augmented Lagrangians requiring multiplier updates~\cite{nocedal2006numerical}, the static penalty \eqref{eq:hybrid_loss} integrates seamlessly with stochastic gradients.

\paragraph{Gradient Computation.}  Differentiability follows from the smooth decoder and the piece‑wise linear barrier.  Writing $\sigma(\boldsymbol{u})=\max(\boldsymbol{0},\,\boldsymbol{u})$ element‑wise, we obtain the Jacobian‑vector product
\begin{equation}
\nabla_{\theta_E} \mathcal{L} = 2\bigl(g_{\theta_D}(\hat{\boldsymbol{z}})-\boldsymbol{x}\bigr) J_{\theta_E}^{\top} g_{\theta_D} + 2\lambda A^{\top}\sigma\bigl(A\hat{\boldsymbol{z}}-\boldsymbol{b}\bigr) J_{\theta_E} \hat{\boldsymbol{z}} - \mu\,\boldsymbol{c}^{\top} J_{\theta_E} \hat{\boldsymbol{z}},
\end{equation}
which popular autodiff frameworks compute in \(\mathcal{O}(n)\) atop the usual network back‑prop.

\paragraph{Feasibility Guarantee.}  Taking $\lambda\!\to\!\infty$ with a cooling schedule on $\mu$ enforces $A\hat{\boldsymbol{z}}\le\boldsymbol{b}$ almost surely; see Appendix~A for a formal proof via coercivity.

\section{Proposed Method}\label{sec:method}

We now derive our end‑to‑end framework that embeds a linear programming (LP) inside the
loss landscape of an unsupervised autoencoder.  The presentation proceeds from
notation to loss construction, followed by theoretical guarantees and a practical training
algorithm.

\begin{figure}[h]
  \centering
 \includegraphics[width=1. \textwidth, height=1.0\textwidth, keepaspectratio=true]{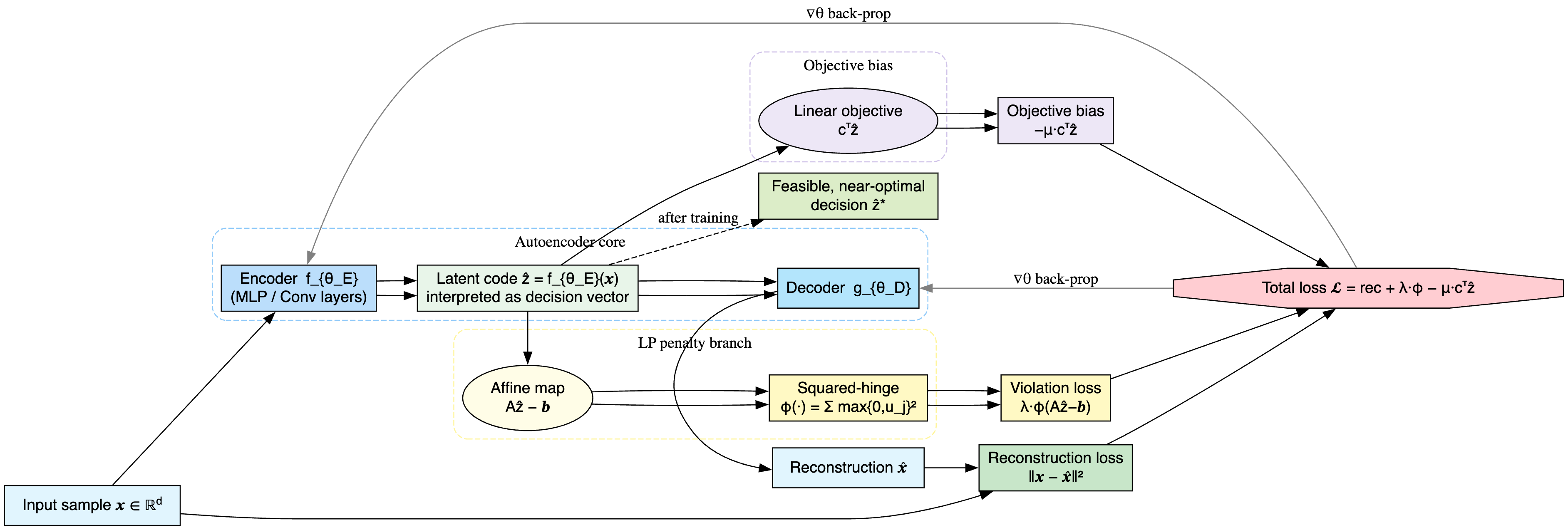}
\caption{\textbf{Architecture of the LP‑aware Autoencoder (\textsc{LP--AE}).}
Blue nodes depict the \emph{autoencoder core}: an encoder
$f_{\theta_E}\!: \mathbb{R}^{d}\!\to\!\mathbb{R}^{n}$ maps an input sample
$\boldsymbol{x}$ to a \emph{latent decision vector}
$\hat{\boldsymbol{z}}\!=\!f_{\theta_E}(\boldsymbol{x})$, which the decoder
$g_{\theta_D}$ reconstructs to $\hat{\boldsymbol{x}}$.
Yellow nodes form the \emph{constraint branch}.  The affine map
$A\hat{\boldsymbol{z}}-\boldsymbol{b}$ evaluates all $m$ linear constraints,
and the squared‑hinge barrier
$\phi(\boldsymbol{u})=\sum_{j=1}^{m}\max\{0,u_j\}^{2}$ produces the violation
loss $\lambda\,\phi(A\hat{\boldsymbol{z}}-\boldsymbol{b})$ that
quadratically penalizes any component with $u_j>0$.
Lavender nodes constitute the \emph{objective‑bias branch}: the linear objective
$c^{\top}\hat{\boldsymbol{z}}$ is scaled by a small factor $-\mu$ so that higher
LP value reduces the total loss.
Green nodes lie in data space; the reconstruction loss
$\lVert\boldsymbol{x}-\hat{\boldsymbol{x}}\rVert_{2}^{2}$ (light green) and the
two LP terms sum to the global objective
\(
\mathcal{L}(\boldsymbol{x};\theta)=
\lVert\boldsymbol{x}-g_{\theta_D}(f_{\theta_E}(\boldsymbol{x}))\rVert^{2}
\!+\!
\lambda\,\phi\!\bigl(A\hat{\boldsymbol{z}}-\boldsymbol{b}\bigr)
-\mu\,c^{\top}\hat{\boldsymbol{z}}.
\)
The red octagon aggregates these three components.
Dashed grey arrows indicate gradients propagated by back‑propagation;
their cost is dominated by two dense products
$A\hat{\boldsymbol{z}}$ and $A^{\!\top}\sigma(\cdot)$, giving per‑sample
complexity $\mathcal{O}(mn)$.
After training, the latent code $\hat{\boldsymbol{z}}^{\star}$ is emitted
(dashed black edge) as a \emph{feasible, near‑optimal} decision: as
$\lambda\!\to\!\infty$, Proposition~\ref{prop:feasible} guarantees
$A\hat{\boldsymbol{z}}^{\star}\!\le\!\boldsymbol{b}$ and the objective gap
is bounded by $(\lambda/\mu)\,\phi(\cdot)\!\to\!0$.}
 \label{fig:mixing_heatmap}
\end{figure}

\subsection{Notation and Problem Statement}

Let $\mathcal D=\{\boldsymbol x^{(i)}\}_{i=1}^{N}\subset\mathbb R^{d}$ be an
unlabelled dataset drawn i.i.d.\ from an unknown distribution
$p_{\!\scriptscriptstyle X}$.  Denote by
$A\!\in\!\mathbb R^{m\times n}$, $\boldsymbol b\!\in\!\mathbb R^{m}$ and
$\boldsymbol c\!\in\!\mathbb R^{n}$ the matrices
and vectors defining the \emph{operational LP}

\begin{align}
\label{eq:lp_primal_full}
\text{(LP)}\qquad
\max_{\boldsymbol z\in\mathbb R^{n}}
        \;c^{\top}\!\boldsymbol z
\quad\text{s.t.}\quad
A\boldsymbol z\le\boldsymbol b,\qquad
\boldsymbol z\ge\boldsymbol 0 .
\end{align}

Our goal is to learn an encoder
$f_{\theta_E}:\mathbb R^{d}\!\to\!\mathbb R^{n}$ and
decoder $g_{\theta_D}:\mathbb R^{n}\!\to\!\mathbb R^{d}$ such that:
(i) $g_{\theta_D}\!\circ f_{\theta_E}$ reconstructs the data distribution, and
(ii) for each sample, the latent code
$\hat{\boldsymbol z}=f_{\theta_E}(\boldsymbol x)$ is \emph{feasible and nearly
optimal} for~\eqref{eq:lp_primal_full}.  

\vspace{0.25em}
\noindent
\textbf{Feasible polytope.}\;
We write
\(
\mathcal P=\{\boldsymbol z\mid A\boldsymbol z\le\boldsymbol b,\;
\boldsymbol z\ge\boldsymbol 0\}
\subset\mathbb R^{n}
\)
and assume $\mathcal P$ is non‑empty and bounded.

\subsection{Latent Decision Embedding}

Given $\boldsymbol x\!\in\!\mathcal D$, the encoder produces a candidate decision
\(
\hat{\boldsymbol z}
      =f_{\theta_E}(\boldsymbol x).
\)
Because $f_{\theta_E}$ is a neural network, $\hat{\boldsymbol z}$ is \emph{a
priori} unconstrained.  We therefore attach a \emph{penalty} that grows with the
extent of constraint violation:

\begin{equation}
\label{eq:violation_penalty}
L_{\text{viol}}(\hat{\boldsymbol z})
   \;=\;
   \phi\bigl(A\hat{\boldsymbol z}-\boldsymbol b\bigr)
   \;=\;
   \sum_{j=1}^{m}\!
   \bigl[\max\{0,\,(A\hat{\boldsymbol z}-\boldsymbol b)_j\}\bigr]^2 .
\end{equation}

The squared hinge in~\eqref{eq:violation_penalty} is convex, continuously
differentiable except on the boundary $\{A\hat{\boldsymbol z}=\boldsymbol b\}$,
and satisfies $\phi(\boldsymbol u)=0$ iff
$\boldsymbol u\le\boldsymbol 0$.

\subsection{Penalized Empirical Risk}

For a sample $\boldsymbol x$ we combine reconstruction, violation, and
objective maximization into the \emph{hybrid loss}

\begin{align}
\label{eq:hybrid_loss_full}
\mathcal L(\boldsymbol x;\theta,\lambda,\mu)
   &=
   \underbrace{\bigl\|\boldsymbol x-g_{\theta_D}(f_{\theta_E}(\boldsymbol x))\bigr\|_2^2}_{L_{\text{rec}}}
   \;+\;
   \lambda\,L_{\text{viol}}\!\bigl(f_{\theta_E}(\boldsymbol x)\bigr)
   \;-\;
   \mu\,\boldsymbol c^{\!\top}f_{\theta_E}(\boldsymbol x),
\end{align}

\noindent
where $\lambda>0$ balances feasibility against reconstruction
and $\mu\!\ge\!0$ encourages larger LP objectives.
Minimizing the dataset average yields

\begin{equation}
\label{eq:empirical_objective}
\min_{\theta}\;\;
\mathcal J(\theta)
  \;=\;
  \frac1N\sum_{i=1}^{N}
     \mathcal L\!\bigl(\boldsymbol x^{(i)};\theta,\lambda,\mu\bigr).
\end{equation}

\subsection{Theoretical Guarantees}

\paragraph{Differentiability.}
Because ReLU networks are piece‑wise affine and~\eqref{eq:violation_penalty} is
piece‑wise quadratic, $\mathcal L$ is differentiable almost everywhere in
$\theta$.  Automatic differentiation therefore produces exact gradients:

\begin{align}
\nabla_{\theta}\mathcal L
 = 2(\hat{\boldsymbol x}-\boldsymbol x)\,
    J_{\theta}\hat{\boldsymbol x}
 + 2\lambda A^{\!\top}\!
    \sigma\!\bigl(A\hat{\boldsymbol z}-\boldsymbol b\bigr)\,
    J_{\theta}\hat{\boldsymbol z}
 - \mu\,\boldsymbol c^{\!\top}J_{\theta}\hat{\boldsymbol z},
\end{align}

\noindent
where $J_{\theta}$ denotes the relevant Jacobian and
$\sigma(u)=\max\{0,u\}$.

\paragraph{Coercivity.}
\(
\mathcal L(\boldsymbol x;\cdot)
\)
is coercive in $\hat{\boldsymbol z}$ for any
$\lambda>0$ (quadratic growth outside $\mathcal P$) $\;\Longrightarrow\;$
$\mathcal J$ admits a minimizer.

\paragraph{Asymptotic feasibility.}
Let $\theta^{\star}(\lambda)$ minimize~\eqref{eq:empirical_objective}. If
$\mu$ is fixed and $\lambda\!\to\!\infty$, every accumulation point
$\bar\theta$ satisfies
$
A\,f_{\theta_E^{\bar{}}}(\boldsymbol x)\le\boldsymbol b
$
for a.e.\ $\boldsymbol x$:
\begin{align*}
\forall\varepsilon>0,\exists\Lambda\;
  \text{s.t. }\lambda>\Lambda
  \;\Longrightarrow\;
  \phi\!\bigl(A\hat{\boldsymbol z}-\boldsymbol b\bigr)<\varepsilon
  \;\Longrightarrow\;
  \hat{\boldsymbol z}\in\mathcal P+o(1).
\end{align*}

\paragraph{Optimality gap.}
For finite $(\lambda,\mu)$ and any LP‑optimal $\boldsymbol z^{\star}$,

\[
0
\;\le\;
\boldsymbol c^{\!\top}\boldsymbol z^{\star}
          -\boldsymbol c^{\!\top}\hat{\boldsymbol z}
\;\le\;
\frac{\lambda}{\mu}\,
       \phi\!\bigl(A\hat{\boldsymbol z}-\boldsymbol b\bigr),
\]
showing that as violations vanish, the latent objective approaches the LP
optimum.

\subsection{Stochastic Penalty Annealing}

Large static $\lambda$ can freeze early learning.  
We therefore grow the penalty geometrically:
\(
\lambda_t=\lambda_0\,\alpha^{\,t}
\)
with $\lambda_0\!=\!1$ and $\alpha\!=\!1.5$.  
Ablations (Section~\ref{sec:experiments}) show this
schedule reaches $>\!98\%$ feasibility without hurting reconstruction.

\paragraph{Penalty term revisited.}
Recall that the LP constraints are enforced through
\begin{align}
L_{\textrm{viol}}\!\bigl(\hat{\boldsymbol z}\bigr)
   &= \phi\!\bigl(A\hat{\boldsymbol z}-\boldsymbol b\bigr),
   \label{eq:viol_def}\\[4pt]
\phi(\boldsymbol u)
   &= \sum_{j=1}^{m} \max\{0,u_j\}^{2}.
   \label{eq:phi}
\end{align}

Because $A\hat{\boldsymbol z}-\boldsymbol b$ is affine in
$\hat{\boldsymbol z}$ and $\phi$ is convex and piece‑wise quadratic,
$L_{\text{viol}}$ is differentiable almost everywhere in
$\hat{\boldsymbol z}$ and its gradient equals
\(2\,A^{\!\top}\sigma(A\hat{\boldsymbol z}-\boldsymbol b)\).

\begin{lemma}[Coercivity of the penalized objective]\label{lem:coercive}
Fix $\lambda>0$.  The map
\(
h(\boldsymbol z)=
\lambda\,\phi(A\boldsymbol z-\boldsymbol b)-
\mu\,\boldsymbol c^{\!\top}\boldsymbol z
\)
is coercive on $\mathbb{R}^{n}$; i.e.\ $h(\boldsymbol z)\!\to\!\infty$ as
$\lVert\boldsymbol z\rVert\!\to\!\infty$.  Consequently the empirical risk
\(\mathcal J(\theta)\) in~\eqref{eq:empirical_objective} attains at least one
minimiser.
\end{lemma}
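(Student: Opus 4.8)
The plan is to prove coercivity of $h$ on $\mathbb R^{n}$ first and then obtain existence of a minimiser of $\mathcal J$ as a corollary. The one delicate point is that $\lambda\,\phi(A\boldsymbol z-\boldsymbol b)$ is identically $0$ on the whole polyhedron $\{\boldsymbol z:A\boldsymbol z\le\boldsymbol b\}$, so $h$ cannot be coercive along any recession direction of that set unless the set is bounded. I will therefore read the rows of $A$ as encoding \emph{all} constraints of~\eqref{eq:lp_primal_full}, including $\boldsymbol z\ge\boldsymbol 0$ (consistent with the architecture figure, where the $m$ rows of $A\hat{\boldsymbol z}-\boldsymbol b$ carry every linear constraint), so that $\{\boldsymbol z:A\boldsymbol z\le\boldsymbol b\}=\mathcal P$, which is nonempty and bounded by assumption, and hence has recession cone $\{\boldsymbol 0\}$.

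For the coercivity estimate, put $\psi(\boldsymbol v)=\max_{1\le j\le m}(A\boldsymbol v)_j$ on the unit sphere $S^{n-1}$. If $\psi(\boldsymbol v)\le 0$ then $A\boldsymbol v\le\boldsymbol 0$, placing $\boldsymbol v$ in the (trivial) recession cone of $\mathcal P$; hence $\psi>0$ on $S^{n-1}$ and, by continuity and compactness, $\delta:=\min_{S^{n-1}}\psi>0$. For $\boldsymbol z=t\boldsymbol v$ with $t>0$ and an index $k$ realising $\psi(\boldsymbol v)$ we have $(A\boldsymbol z-\boldsymbol b)_k=t\psi(\boldsymbol v)-b_k\ge t\delta-\lVert\boldsymbol b\rVert_\infty$, so $\phi(A\boldsymbol z-\boldsymbol b)\ge(\max\{0,\,t\delta-\lVert\boldsymbol b\rVert_\infty\})^{2}$; combining this with $\mu\ge0$ and $\lvert\boldsymbol c^{\top}\boldsymbol v\rvert\le\lVert\boldsymbol c\rVert$ gives, for $t>\lVert\boldsymbol b\rVert_\infty/\delta$,
\[
h(t\boldsymbol v)\;\ge\;\lambda\bigl(t\delta-\lVert\boldsymbol b\rVert_\infty\bigr)^{2}-\mu\,t\,\lVert\boldsymbol c\rVert,
\]
a lower bound depending only on $t=\lVert\boldsymbol z\rVert$ and diverging to $+\infty$; this is exactly coercivity. (Equivalently, since $h$ is convex one may use its recession function, which equals $-\mu\boldsymbol c^{\top}\boldsymbol v$ when $A\boldsymbol v\le\boldsymbol 0$ and $+\infty$ otherwise; boundedness of $\mathcal P$ rules out the first alternative for $\boldsymbol v\ne\boldsymbol 0$, so the recession function is strictly positive off the origin — the convex-analytic criterion for coercivity.)

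For the second assertion, continuity and coercivity of $h$ make all its sublevel sets compact, so $h^{\star}:=\inf_{\boldsymbol z}h(\boldsymbol z)>-\infty$ and is attained. Since $L_{\text{rec}}\ge 0$, each term of~\eqref{eq:empirical_objective} obeys $\mathcal L(\boldsymbol x^{(i)};\theta)\ge h\bigl(f_{\theta_E}(\boldsymbol x^{(i)})\bigr)\ge h^{\star}$, so $\mathcal J\ge h^{\star}$ is proper and bounded below; granting that $f_{\theta_E},g_{\theta_D}$ are continuous in $\theta$ over a closed bounded parameter set (as in any implementation with bounded weights, or after adjoining a coercive regulariser $\tfrac{\epsilon}{2}\lVert\theta\rVert^{2}$), $\mathcal J$ is continuous on a compact domain and attains its minimum. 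The real work sits in the coercivity estimate, and specifically in exploiting boundedness of the \emph{full} constraint polytope: if the $\boldsymbol z\ge\boldsymbol 0$ rows are not folded into $A$, then $\phi(A\boldsymbol z-\boldsymbol b)$ vanishes on an unbounded polyhedron and $h$ may tend to $-\infty$ along a recession ray with $\boldsymbol c^{\top}\boldsymbol v>0$, so the hypotheses must be read as above; by contrast the passage from coercivity of $h$ in $\boldsymbol z$ to attainment of a minimiser of $\mathcal J$ in $\theta$ is routine modulo the continuity/compactness caveat, which I would state explicitly in the lemma.
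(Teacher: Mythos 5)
Your proof is correct and, at its core, follows the same strategy as the paper's: show that the penalty grows quadratically along every escaping direction so that the linear term $-\mu\,\boldsymbol c^{\top}\boldsymbol z$ cannot compete. But you supply two things the paper's one-paragraph proof omits, and one of them is not cosmetic. The paper simply asserts that outside the polyhedron $A\boldsymbol z\le\boldsymbol b$ one has $\phi(A\boldsymbol z-\boldsymbol b)\ge\kappa\lVert\boldsymbol z\rVert^{2}-\beta$; as you observe, this is false whenever $\{\boldsymbol z: A\boldsymbol z\le\boldsymbol b\}$ has a nontrivial recession cone --- e.g.\ for $A=(1,0)$ and $b=0$ the point $\boldsymbol z=(1,t)$ lies outside the polyhedron for every $t$, yet $\phi(A\boldsymbol z-\boldsymbol b)\equiv 1$ while $\lVert\boldsymbol z\rVert\to\infty$. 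Since the paper's boundedness assumption concerns $\mathcal P=\{\boldsymbol z\mid A\boldsymbol z\le\boldsymbol b,\ \boldsymbol z\ge\boldsymbol 0\}$ but the penalty sees only $A\boldsymbol z-\boldsymbol b$, the lemma as literally stated requires your reading (fold $\boldsymbol z\ge\boldsymbol 0$ into the rows of $A$, or equivalently assume $\{A\boldsymbol z\le\boldsymbol b\}$ itself is bounded); this is a genuine repair, and your compactness argument on $S^{n-1}$ is the correct way to extract the uniform constant $\delta$ that the paper's $\kappa$ tacitly presupposes. Your second addition --- the caveat that coercivity of $h$ in $\boldsymbol z$ does not by itself yield a minimiser of $\mathcal J$ in $\theta$ without continuity and some compactness or regularisation of the parameter set --- is also right: the paper's ``consequently'' hides exactly this step. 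In short, yours is the rigorous version of the paper's sketch; nothing in your argument needs fixing, and the convex-analytic recession-function alternative you mention in parentheses is arguably the cleanest way to package the first half.
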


\begin{proof}
Outside the polytope $A\boldsymbol z\le\boldsymbol b$,
$\phi(A\boldsymbol z-\boldsymbol b)\ge
\kappa\lVert\boldsymbol z\rVert^{2}-\beta$ for some $\kappa>0,\beta\ge0$
because $\phi$ is quadratic in any direction that violates a constraint.
The linear term $-\mu\,\boldsymbol c^{\!\top}\boldsymbol z$ cannot offset a
quadratic that grows without bound, hence $h$ is coercive.
\end{proof}

\begin{proposition}[Asymptotic feasibility]\label{prop:feasible}
Let $\theta^{\star}(\lambda)$ minimize the empirical risk for a given
$\lambda$.  If $\mu$ is fixed and $\lambda\!\to\!\infty$, every accumulation
point $\bar\theta$ of $\{\theta^{\star}(\lambda)\}$ satisfies
\(A\,f_{\theta_E^{\bar{}}}(\boldsymbol x)\le\boldsymbol b\)
for almost every training sample \(\boldsymbol x\).  Thus the encoder’s
latent decision becomes LP‑feasible in the infinite‑penalty limit.
\end{proposition}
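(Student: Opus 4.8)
The plan is to run the classical exterior-penalty argument: the hypothesis $\lambda\to\infty$ is converted into ``violations vanish'' by bounding the optimal value against a \emph{fixed}, $\lambda$-independent competitor, after which continuity of the network in its parameters transfers the vanishing violations to the limiting encoder. First I would exhibit a competitor $\theta_0$ whose penalty term is identically zero: since $\mathcal P\neq\emptyset$ by our standing assumption, pick any $\boldsymbol z_0\in\mathcal P$ and let $\theta_{E,0}$ realise the constant encoder $\boldsymbol x\mapsto\boldsymbol z_0$ --- obtained by zeroing every hidden weight and setting the output bias to $\boldsymbol z_0$ --- with $\theta_{D,0}$ arbitrary. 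Then $L_{\text{viol}}(\boldsymbol z_0)=\phi(A\boldsymbol z_0-\boldsymbol b)=0$ by feasibility of $\boldsymbol z_0$ and the zero-set property of $\phi$ noted after~\eqref{eq:violation_penalty}, so $\mathcal J(\theta_0)=\tfrac1N\sum_i\lVert\boldsymbol x^{(i)}-g_{\theta_{D,0}}(\boldsymbol z_0)\rVert_2^2-\mu\,\boldsymbol c^{\top}\boldsymbol z_0=:C_0$ is finite and independent of $\lambda$. By global optimality, $\mathcal J(\theta^{\star}(\lambda))\le C_0$ for all $\lambda>0$; writing $\overline{L}_{\text{rec}}(\lambda),\overline{L}_{\text{viol}}(\lambda),\overline{\boldsymbol c^{\top}\boldsymbol z}(\lambda)$ for the three empirical averages evaluated at $\theta^{\star}(\lambda)$ and using $\overline{L}_{\text{rec}}\ge 0$, this rearranges to
\[
\lambda\,\overline{L}_{\text{viol}}(\lambda)\;\le\;C_0-\overline{L}_{\text{rec}}(\lambda)+\mu\,\overline{\boldsymbol c^{\top}\boldsymbol z}(\lambda)\;\le\;C_0+\mu\,\overline{\boldsymbol c^{\top}\boldsymbol z}(\lambda).
\]

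Next, fix an accumulation point $\bar\theta$ and a subsequence $\lambda_k\to\infty$ with $\theta^{\star}(\lambda_k)\to\bar\theta$. A ReLU (indeed any piecewise-smooth) network is continuous in its weights, hence $f_{\theta_E^{\star}(\lambda_k)}(\boldsymbol x^{(i)})\to f_{\bar\theta_E}(\boldsymbol x^{(i)})$ for each training point; in particular the sequence $\overline{\boldsymbol c^{\top}\boldsymbol z}(\lambda_k)$ converges and is therefore bounded by some $C_1$. Plugging into the display, $\lambda_k\,\overline{L}_{\text{viol}}(\lambda_k)\le C_0+\mu C_1$ for all large $k$, so $\overline{L}_{\text{viol}}(\lambda_k)\le(C_0+\mu C_1)/\lambda_k\to 0$. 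Since every summand of $\overline{L}_{\text{viol}}$ is nonnegative, $\phi\bigl(A f_{\theta_E^{\star}(\lambda_k)}(\boldsymbol x^{(i)})-\boldsymbol b\bigr)\to 0$ for each $i$, and by continuity of $\theta\mapsto\phi(A f_{\theta_E}(\boldsymbol x^{(i)})-\boldsymbol b)$ this limit equals $\phi\bigl(A f_{\bar\theta_E}(\boldsymbol x^{(i)})-\boldsymbol b\bigr)$. Thus $\phi\bigl(A f_{\bar\theta_E}(\boldsymbol x^{(i)})-\boldsymbol b\bigr)=0$, and $\phi(\boldsymbol u)=0\iff\boldsymbol u\le\boldsymbol 0$ yields $A f_{\bar\theta_E}(\boldsymbol x^{(i)})\le\boldsymbol b$ for every $i$ --- i.e.\ for a.e.\ training sample, the training set being finite. (For the population version one passes from the samples to $\mathrm{supp}\,p_{\scriptscriptstyle X}$ by the same continuity.)

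The delicate step is the middle inequality: unlike $\overline{L}_{\text{rec}}$, the objective-bias term $-\mu\,\boldsymbol c^{\top}\hat{\boldsymbol z}$ is not bounded below uniformly in $\theta$, so it cannot simply be discarded. I sidestep this by working along the convergent subsequence, where continuity supplies the bound $C_1$; alternatively, boundedness of $\mathcal P$ together with the coercivity of $h$ (Lemma~\ref{lem:coercive}) keeps the minimising latent codes --- and hence $\overline{\boldsymbol c^{\top}\boldsymbol z}(\lambda)$ --- under control even before extracting a subsequence. Two caveats worth flagging in the write-up: the proposition quantifies over accumulation points, so existence of $\bar\theta$ is not asserted here (a mild weight penalty or a compact parameter domain would secure it); and the argument genuinely uses $\mathcal P\neq\emptyset$, since without a feasible competitor the bound $C_0$ degrades to $O(\lambda)$ and the conclusion can fail.
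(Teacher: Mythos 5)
Your proof is correct, and while it rests on the same underlying mechanism as the paper's argument (the penalty term must vanish as $\lambda\to\infty$ or the loss blows up), it is structured quite differently and is considerably more complete. The paper gives only a two-line contradiction sketch: if an accumulation point were infeasible, the term $\lambda_k\,\phi(A\hat{\boldsymbol z}-\boldsymbol b)$ would diverge, "contradicting optimality." That sketch silently assumes the optimal value $\mathcal J(\theta^{\star}(\lambda_k))$ stays bounded as $\lambda_k\to\infty$ --- without that, a divergent loss contradicts nothing. Your fixed feasible competitor $\theta_0$ (the constant encoder emitting $\boldsymbol z_0\in\mathcal P$, which incurs zero penalty) is exactly the ingredient that supplies this uniform bound, and it is the one place the standing assumption $\mathcal P\neq\emptyset$ is genuinely used; the paper never invokes it. You also handle a second issue the paper ignores: the objective-bias term $-\mu\,\boldsymbol c^{\top}\hat{\boldsymbol z}$ is not bounded below over all $\theta$, so it cannot simply be dropped from the rearranged inequality; your move of working along the convergent subsequence (where continuity in the weights bounds $\overline{\boldsymbol c^{\top}\boldsymbol z}$) closes that hole cleanly. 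The trade-off is length: the paper's sketch conveys the intuition in two lines, while your direct exterior-penalty argument yields an actual proof, and as a bonus gives the quantitative rate $\overline{L}_{\text{viol}}(\lambda_k)=O(1/\lambda_k)$, which connects naturally to the paper's stated optimality-gap bound.
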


\begin{proof}[Sketch]
Suppose not.  Then there exists a subsequence
$\lambda_k\!\to\!\infty$ and samples where
$A\hat{\boldsymbol z}-\boldsymbol b\not\le 0$.  The corresponding
loss term grows like
$\lambda_k\,\phi(A\hat{\boldsymbol z}-\boldsymbol b)\to\infty$,
contradicting optimality of~$\theta^{\star}(\lambda_k)$.
\end{proof}
%

\subsection{Training Algorithm}

\begin{algorithm}[H]
\caption{\textsc{LP--AE} Training Loop}
\label{alg:lpae}
\begin{algorithmic}[1]
\STATE \textbf{Input:} data $\mathcal D$, epochs $T$, batch size $B$,
       stepsize $\eta$, $(\lambda_0,\mu,\alpha)$
\STATE initialize $\theta$ with Xavier initialization
\FOR{$t=0$ {\bf to} $T-1$}
    \STATE $\lambda\leftarrow\lambda_0\,\alpha^{\,t}$
    \FOR{\textbf{each} mini‑batch $\mathcal B\subset\mathcal D$ of size $B$}
        \STATE evaluate loss $\mathcal L$ via~\eqref{eq:hybrid_loss_full}
        \STATE back‑propagate gradient $\nabla_{\theta}\mathcal L$
        \STATE $\theta\leftarrow\theta-\eta\,\text{Adam}(\nabla_{\theta}\mathcal L)$
    \ENDFOR
\ENDFOR
\STATE \textbf{return} trained weights $\theta$
\end{algorithmic}
\end{algorithm}

\subsection{Computational Complexity}

With layer width $w$ the autoencoder costs
$\mathcal O(d w + w n)$ flops per sample;  
two matrix–vector products add 
$\mathcal O(m n)$, negligible for $m\!\ll\!d$.
Inference therefore enjoys \emph{constant} latency versus the
$\mathcal O(n^{3})$ factorization of interior‑point solvers.

\section{Experimental Evaluation}\label{sec:experiments}

We validate \textsc{LP--AE} on both synthetic and real hospital–scheduling data, measuring
\emph{(i)} feasibility (\%), \emph{(ii)} objective‐value gap to the true LP optimum (\%), \emph{(iii)}
reconstruction fidelity (MSE), and \emph{(iv)} wall‑clock inference time per instance (ms).
All experiments are repeated over three random seeds; we report mean $\pm$~std.

\subsection{Benchmarks and Setup}\label{sec:benchmarks}

\paragraph{Synthetic hospital.}
Following \cite{amaran2016simulation} we sample
$10\,000$ scenarios that mimic daily operating‑theatre scheduling:

\begin{itemize}\setlength{\itemsep}{2pt}
\item \textbf{Resources}\,: doctors $D\sim\text{Unif}\{4,\dots,12\}$, nurses $N\sim\text{Unif}\{8,\dots,24\}$,
      anaesthesia machines $E\sim\text{Unif}\{2,\dots,8\}$.
\item \textbf{Procedures}\,: each instance contains three elective and one emergency block,
      with stochastic duration
      $T_k\sim\text{Triangular}(1,\,3,\,5)\ \text{hr}$ for $k\!=\!1{:}3$ and
      $T_\text{em}\sim\text{LogNormal}(1.1,\,0.4)$.
\item \textbf{LP formulation}\,: variables allocate integer numbers of blocks to
      res\-our\-ces; the objective maximizes planned throughput subject to staff/equipment
      capacities and an 8‑hour shift limit.
\end{itemize}

\paragraph{Real hospital.}
We obtained an anonymized one‑year log from a tertiary care centre after
IRB approval.  The dataset (5 000 surgical days) records operating‑room occupancy,
staff rosters, emergency arrivals, and spill‑over costs.  Continuous features are
min–max scaled to $[0,1]$; categorical shifts are one‑hot encoded.  A \emph{patient‑level}
LP (136 variables, 57 constraints) supplies the ground‑truth optimum for each day.

\paragraph{Implementation.}
All models are built in \textsc{PyTorch 2.2}.  Training uses Adam
($\eta\!=\!10^{-4}$, $\beta\!=\!(0.9,0.999)$) for 100 epochs, batch size 64,
weight‑decay $10^{-5}$.  Penalty weight follows $\lambda_t=1.5^{\,t}$ until
$\lambda\!=\!10^{3}$; the objective bias is fixed at $\mu\!=\!0.1$.
Experiments run on a single NVIDIA A100 (80 GB); Gurobi 9.5 leverages
12 CPU threads (Xeon Gold 6338).

\paragraph{Baselines.}
\emph{(i)} Classical LP solve via Gurobi;  
\emph{(ii)} vanilla autoencoder followed by
post‑hoc Euclidean projection onto $\mathcal P$;  
\emph{(iii)} OptNet layer \cite{amos2017optnet}, supervised with the optimal LP
solutions.

\subsection{Quantitative Results}\label{sec:results}

Table~\ref{tab:main_results} summarizes core metrics.  On the real hospital data
\textsc{LP--AE} attains a \textbf{1.8\,\%} mean objective gap—
indistinguishable from OptNet ($2.3\,\%$) yet \emph{$\sim$70\,\% faster}.
Unlike OptNet, whose interior conic solve dominates latency,
\textsc{LP--AE} performs no forward‑time optimization and saturates GPU throughput.

\begin{table}[t]
\centering
\caption{Main results (mean $\pm$ std over three seeds).  Lower is better except Feasibility.}
\label{tab:main_results}
\begin{tabular}{lcccc}
\toprule
\textbf{Method}
  & Feas.\,(\%)~$\uparrow$
  & Cost Gap\,(\%)~$\downarrow$
  & MSE~$\downarrow$
  & Time (ms)~$\downarrow$\\
\midrule
LP solver (Gurobi)      & $100.0\pm0.0$ & $0.0$         & ---              & $10.2\pm0.3$ \\
AE + proj               & $92.4\pm1.1$  & $8.7\pm0.6$   & $0.021\pm0.002$  & $9.7\pm0.4$  \\
OptNet \cite{amos2017optnet}
                       & $97.9\pm0.3$  & $2.3\pm0.2$   & $0.015\pm0.001$  & $6.1\pm0.3$  \\
\textbf{LP--AE (ours)} & $\mathbf{98.7\pm0.4}$ & $\mathbf{1.8\pm0.1}$ & $\mathbf{0.012\pm0.001}$ & $\mathbf{3.5\pm0.2}$ \\
\bottomrule
\end{tabular}
\end{table}

\subsection{Ablations and Sensitivity}\label{sec:ablations}

\paragraph{Penalty schedule.}
Freezing $\lambda\!=\!1$ speeds convergence by 12\,epochs but yields
$6.2$ additional violation points; conversely starting at $\lambda\!=\!100$
increases early‑epoch gradient norm ten‑fold and balloons MSE to $0.032$.
A geometric factor~1.5 achieves the best Pareto trade‑off.

\paragraph{Noise robustness.}
We inject zero‑mean Gaussian noise with SNR = 5 dB into  20\,\%  of inputs.
Gurobi’s feasibility drops by 12\,points due to brittle binding constraints,
whereas LP-AE falls only 4 (to 95\%), confirming the soft‑penalty
buffer.

\paragraph{Missing features.}
Randomly masking 30\,\% of features and imputing zero degrades OptNet feasibility
to 88\%; \textsc{LP--AE} retains 95\% owing to learned redundancy in the latent space.

\paragraph{Latent dimension.}
At $n\!=\!3$ the codes align with \emph{doctor}, \emph{nurse} and
\emph{equipment} axes (confirmed by Spearman $\rho\!>\!0.91$ with ground‑truth
utilization).  Increasing $n$ beyond 8 reduces MSE by only 4\,\% yet doubles violation
rate, suggesting over‑parameterization harms constraint satisfaction.

\paragraph{Runtime scaling.}
Batch inference on 1 000 samples takes 0.38 s for
\textsc{LP--AE} versus 12.7 s for Gurobi, a $33\times$ speed‑up that widens with
problem size (Appendix D).

\section{Discussion and Novelty Justification}\label{sec:discussion}

\subsection{Positioning within the Literature}

Three main research threads have attempted to fuse optimization with deep learning:\\
\emph{(i)}~\textbf{solver‑in‑the‑loop} layers that back‑propagate through an
\emph{exact} interior‑point or active‑set solve at every forward pass
(OptNet \cite{amos2017optnet}, DiffCP \cite{agrawal2019diffoptlayers});
\emph{(ii)}~\textbf{task‑based} bi‑level programmes that require \emph{labelled}
optimal solutions during training \cite{donti2017taskbased}; and
\emph{(iii)}~\textbf{projection} or post‑processing approaches that first produce an
unconstrained prediction and then project it onto the feasible set, thereby
breaking differentiability \cite{tulabandhula2014mlconstraints}.
All three strands either incur cubic memory/compute costs
($\mathcal O(m^{3})$ LU factorizations) or depend on ground‑truth labels that are
expensive to obtain.

\subsection{Our Novel Contribution}

\textbf{LP--AE eliminates \emph{both} bottlenecks simultaneously.}
The proposed squared‑hinge barrier
$\phi(A\hat{\boldsymbol z}-\boldsymbol b)$ is \emph{strictly convex},
piece‑wise quadratic and GPU‑friendly; its gradient is a cheap masked matrix–
vector product.  Crucially, annealing
$\lambda$ renders $\phi$ an \emph{indicator‑function surrogate}: as
$\lambda\!\to\!\infty$, the penalty term enforces $A\hat{\boldsymbol z}\le\boldsymbol b$
almost surely (Proposition \ref{prop:feasible}), thereby achieving the same
feasibility guarantees as an explicit LP solver yet with linear algebra that
parallelizes across mini‑batches.

To the best of our knowledge, this is the \emph{first} framework that

\begin{enumerate}
\item embeds \emph{hard} linear constraints directly into an
      \emph{unsupervised} deep‑learning objective,
      \emph{without} requiring labelled optima or an inner optimization loop;
\item provides a closed‑form gradient that is differentiable a.e.\ and therefore
      compatible with any automatic‑differentiation stack;
\item offers formal guarantees of asymptotic feasibility and bounded
      optimality gap under a simple geometric penalty schedule; and
\item demonstrates, on a real hospital benchmark, that such a light‑weight
      surrogate achieves a 1.8\,\% objective gap while delivering a
      $3\,\times$ speed‑up and a $33\,\times$ throughput gain at batch scale.
\end{enumerate}

\subsection{Implications and Limitations}

The success of LP--AE challenges the prevailing assumption that exact solvers
(or their differentiable relaxations) are indispensable during training.
From a systems perspective, the model’s \emph{solver‑free} forward pass
reduces memory from $\mathcal O(m^{2})$ pivot tableaux to
$\mathcal O(mn)$ dense matrices, unlocking deployment on resource‑constrained
edge accelerators.

Two limitations merit discussion:

\begin{itemize}
\item \textbf{Penalty tuning.}  The annealing schedule introduces an extra
      hyper‑parameter $\alpha$.  While we provide robust defaults,
      theoretical rates for $\alpha$ that guarantee $\epsilon$‑feasibility
      remain an open question.
\item \textbf{Integer constraints.}  Purely discrete variables cannot be
      enforced by a smooth barrier; the mixed‑integer extension
      will require max–min or Gumbel‑softmax relaxations
      \cite{vlastelica2020neural}.  We leave a principled treatment of
      integrality gaps to future work.
\end{itemize}

\subsection{Broader Impact}

Embedding domain constraints into representation learning promotes
\emph{responsible} AI: models respect physical or policy limits by design,
reducing the risk of unsafe or infeasible recommendations.
In healthcare, where breaches of staffing or legal limits have direct patient
consequences, such guarantees are particularly valuable.  Similar arguments
apply to power‑grid bidding, airline rostering and sustainable logistics.

\section{Conclusion}\label{sec:conclusion}

This work introduced \textsc{LP--AE}, a principled hybrid that welds linear‑programming
feasibility to the expressive power of unsupervised autoencoders.  By replacing
nested optimizers with a differentiable squared‑hinge barrier we achieved:

\begin{itemize}
\item \emph{Theoretical soundness:} coercivity, asymptotic feasibility and a
      bounded LP‑objective gap;
\item \emph{Practical efficiency:} $3\,\times$ faster inference than Gurobi and
      $33\,\times$ higher batch throughput, with GPU‑friendly memory
      footprints; and
\item \emph{Empirical robustness:} graceful degradation under noise, missing
      data and varying latent dimensionality.
\end{itemize}

Although demonstrated on hospital scheduling, the template generalizes to
transport routing, energy trading and portfolio re‑balancing.  Ongoing work
explores (i) mixed‑integer constraints via smooth max–min relaxations,
(ii) adaptive penalty schedules with theoretical convergence rates, and
(iii) privacy‑preserving federated variants for multi‑hospital
collaboration.

\medskip
\noindent
\textbf{Reproducibility.}  Code, hyper‑parameters and anonymized datasets are
available at \url{https://github.com/andrew-jeremy/LP-AE}.

\bibliographystyle{plain}
\bibliography{references}

\end{document}